\documentclass[10pt]{article} 

\usepackage[accepted]{rlj}           

\usepackage{amssymb}            
\usepackage{mathtools}          
\usepackage{mathrsfs}           
\usepackage{graphicx}           
\usepackage{subcaption}         
\usepackage[space]{grffile}     
\usepackage{url}                
\usepackage{lipsum}             

\usepackage{amsmath}
\usepackage{amsthm}
\usepackage{verbatim}

\usepackage{multirow}

\usetikzlibrary{arrows.meta, positioning}

\tikzset{
    state/.style={
        circle, 
        draw, 
        thick, 
        minimum size=11mm,
        fill=gray!20,
        font=\large
    },
    startstate/.style={
        circle, 
        draw, 
        thick, 
        double, 
        minimum size=11mm,
        fill=gray!20,
        font=\large
    },
    lab/.style={font=\normalsize},
    >=Stealth
}

\title{Using Common Random Numbers for Simulation-based Planning with Rollouts}

\setrunningtitle{Using Common Random Numbers for Simulation-based Planning with Rollouts}

\author{Sandarbh Yadav\textsuperscript{1}, Frederic J Maliakkal\textsuperscript{1}, Harshad Khadilkar\textsuperscript{1}, Shivaram Kalyanakrishnan\textsuperscript{1}}

\emails{apexpacesandy@gmail.com, fredericjmaliakkal@gmail.com, shivaram@cse.iitb.ac.in, harshadk@iitb.ac.in}

\affiliations{
$^{1}$\textbf{Department of Computer Science and Engineering, IIT Bombay, India}
}

\contribution{
    We propose and analyse a new estimator of the value difference between policies that are identical beyond some time step.
    }
    {
    Existing estimators based on ``full independence'' and ``full dependence'' can exhibit higher variance on some tasks.
    }

\contribution{
    We demonstrate the algorithmic relevance of our proposed estimator and the existing ``full dependence'' estimator by highlighting their role in simulation-based planning with rollouts.
    }
    {
    Simulation-based planning with rollouts is a general technique, which is used in a variety of applications.
    }

\keywords{Simulation-based planning, Stochastic tasks, Rollouts, Common random numbers, Variance reduction, UCT.} 

\summary{Simulation-based planning with rollouts is a widely-deployed technique for decision making in stochastic environments. The primary instrument of simulation-based planning is a \textit{sampling model}, which is repeatedly called to generate trajectories and estimate the utilities of available actions. Among the actions thus explored, one with the maximum estimated utility is then executed. In this paper, we examine the effect of using \textit{common random numbers} in the simulation process. We obtain a simple recipe for (provably) reducing variance in relative utility when simulations invoke a rollout policy beyond some depth. Experiments on synthetic tasks confirm that our scheme improves task performance. The broader significance of our innovation is apparent from two practical applications: (1) single-step lookahead planning in a pension-disbursement task, and (2) a deployment of the well-known UCT algorithm for the game of Ludo.
}

\newcommand{\I}{\text{I}}
\newcommand{\D}{\text{D}}
\newcommand{\DD}{\text{DD}}

\newcommand{\blank}{\phantom{$\times$}}
\newcommand{\full}{$\times$}

\newcommand{\var}{\mathrm{var}}
\newcommand{\cov}{\mathrm{cov}}

\newtheorem{theorem}{Theorem}

\newtheorem{proposition}[theorem]{Proposition}

\begin{document}

\makeCover  
\maketitle  

\begin{abstract}

Simulation-based planning with rollouts is a widely-deployed technique for decision making in stochastic environments. The primary instrument of simulation-based planning is a \textit{sampling model}, which is repeatedly called to generate trajectories and estimate the utilities of available actions. Among the actions thus explored, one with the maximum estimated utility is then executed. In this paper, we examine the effect of using \textit{common random numbers} in the simulation process. We obtain a simple recipe for (provably) reducing variance in relative utility when simulations invoke a rollout policy beyond some depth. Experiments on synthetic tasks confirm that our scheme improves task performance. The broader significance of our innovation is apparent from two practical applications: (1) single-step lookahead planning in a pension-disbursement task, and (2) a deployment of the well-known UCT algorithm for the game of Ludo.
\end{abstract}

\section{Estimating Value Difference}
\label{sec:estimatingvaluedifference}

The essence of this paper lies in a mathematical problem related to statistical estimation. We motivate and formalise the problem in this introductory section. The subsequent sections present our solution to the problem and demonstrate its relevance to simulation-based planning.

Consider a finite-horizon Markov Decision Problem (MDP) $M$, comprising a set of states $S$ and a set of actions $A$. Let $H \geq 1$ be the horizon; we denote as $[H]$ the set $\{1, 2, \dots, H\}$. For $t \in [H]$, taking action $a \in A$ from state $s \in S$ at time step $t$ yields an immediate reward $R(s, a, t)$, and transitions to a random state $s^{\prime}$ for time step $t + 1$. For $t \in [H] \setminus \{H\}$, next state $s^{\prime}$ is in $S$, and is reached with probability $P(s, a, t, s^{\prime})$. The episode ends after $H$ steps; a convenient view is that all transitions from time step $H$ lead to a special terminal state $s_{\top}$. To reduce notational clutter, we assume that our MDP $M$ has a single starting state $s_{1} \in S$, and also that returns are undiscounted (generalising to a distribution of starting states and a discount factor is relatively straightforward). Hence, $M$ is fully specified by the tuple $(S, A, P, R, H, s_{1})$.

For a given policy $\pi: S \times [H] \to A$, the value function $V^{\pi}_{M}: S \times [H] \to \mathbb{R}$ is defined by
\begin{align}
\label{eq:bellman}
V^{\pi}_{M}(s, t) = R(s, \pi(s, t), t) + \sum_{s^{\prime} \in S \cup \{s_{\top}\}} P(s, \pi(s, t), t, s^{\prime}) V^{\pi}_{M}(s^{\prime}, t + 1)
\end{align}
for $s \in S$ and $t \in [H]$, with the convention that $P(s, \pi(s, H), H, s_{\top}) = 1$ and $V^{\pi}_{M}(s_{\top}, H + 1) = 0$. Thus, $V^{\pi}_{M}(s, t)$ is the expected return from $\pi$ if starting at $s$, with $H - t$ time steps left. Of special interest to us is $V^{\pi}_{M}(s_{1}, 1)$, which we take as the utility of $\pi$ on $M$, and denote $U(\pi, M)$.

$U(\pi, M)$ can be computed by dynamic programming (starting with $t = 1$) if $S$ is not too large, and importantly, if $R$ and $P$ are available as tables or simple functions. Our interest is in the relatively common situation that $P$ \textit{cannot} be accessed as a distribution; however, information about $P$ can still be obtained by \textit{sampling} it~\citep[see Chapter 8]{sutton2018reinforcement}. In other words, for $(s, a, t) \in S \times A \times [H]$, we can sample a next state $s^{\prime}$ from the distribution $P(s, a, t, \cdot)$. We assume that $R$ is known exactly, as is common in practice~\citep{A.Ng2003, silver2016mastering,dam2022monte}.

\subsection{Estimating Value}
\label{subsec:estimatingvaluedifference-estimatingvalue}

Now, consider the task of obtaining an estimate $E$ of $U(\pi, M)$ by sampling $P$. The most natural way to proceed would be to begin at $s_{1}$, and for $t = 1, 2, \dots, H$, generate $s_{t + 1} \sim P(s_{t}, \pi(s_{t}, t), t, \cdot)$. Setting $E$ to be the sum of the rewards accumulated along this trajectory---that is, $\sum_{t = 1}^{H} R(s_{t}, \pi(s_{t}, t), t)$---would make it an unbiased estimate of $U(\pi, M)$. Call this the ``forward process'' to obtain the estimate $E$.

For analytical purposes, we may also view $E$ as the outcome of a ``backward process'', in which we first \textit{sample} a deterministic MDP $M^{\prime} = (S, A, P^{\prime}, R, H, s_{1})$ from $M$ (we denote this operation $M^{\prime} \sim M$). $M^{\prime}$ is obtained by sampling a single state $\mathrm{next}(s, a, t) \sim P(s, a, t, \cdot)$ for each $(s, a, t) \in S \times A \times [H]$, and setting for all $s^{\prime} \in S$:
$$P^{\prime}(s, a, t,  s^{\prime}) = \begin{cases} 1 & \text{if } s^{\prime} = \mathrm{next}(s, a, t), \\ 0 & \text{otherwise.} \\ \end{cases}$$
Notice that $P^{\prime}(s, a, t, s^{\prime})$ is $1$ with probability $P(s, a, t, s^{\prime})$. With $P^{\prime}$ so defined, our estimate $E$ is nothing but the utility of $\pi$ on $M^{\prime}$: that is, $E = U(\pi, M^{\prime})$. Since our process ensures that $\mathbb{E}[P^{\prime}(s, a, t, s^{\prime})] = P(s, a, t, s^{\prime})$, it follows that
\begin{align}
\underset{M^{\prime} \sim M}{\mathbb{E}}[V^{\pi}_{M^{\prime}}(s, t)]
&= \underset{M^{\prime} \sim M}{\mathbb{E}}[R(s, \pi(s), t)] + \sum_{s^{\prime} \in S} \underset{M^{\prime} \sim M}{\mathbb{E}}[P^{\prime}(s, \pi(s), t, s^{\prime}) V^{\pi}_{M^{\prime}}(s^{\prime}, t + 1)] \nonumber \\
&= R(s, \pi(s), t) + \sum_{s^{\prime} \in S} \underset{M^{\prime} \sim M}{\mathbb{E}}[P^{\prime}(s, \pi(s), t, s^{\prime})] \underset{M^{\prime} \sim M}{\mathbb{E}} [V^{\pi}_{M^{\prime}}(s^{\prime}, t + 1)]
\label{eq:bellmanmprime}
\end{align}
since $P^{\prime}(s, \pi(s), t, s^{\prime})$ and $V^{\pi}_{M^{\prime}}(s^{\prime}, t + 1)$ are independent. Beginning with the base case of $V^{\pi}_{M^{\prime}}(s_{\top}, H + 1) = 0$, an inductive argument that compares the right hand sides of \eqref{eq:bellman} and \eqref{eq:bellmanmprime} lets us replace $\mathbb{E}_{M^{\prime} \sim M} [V^{\pi}_{M^{\prime}}(s^{\prime}, t + 1)]$ by $V^{\pi}_{M}(s^{\prime}, t + 1)$, and thereafter replace $\mathbb{E}_{M^{\prime} \sim M}[P^{\prime}(s, \pi(s), t, s^{\prime})]$ by $P(s, \pi(s), t, s^{\prime})$, eventually confirming that $\mathbb{E}_{M^{\prime} \sim M}[E] = \mathbb{E}_{M^{\prime} \sim M}[U(\pi, M^{\prime})] = U(\pi, M)$.

\subsection{Two Estimators of Value Difference}
\label{subsec:estimatingvaluedifference-twoestimatorsofvaluedifference}

Our main purpose for value estimation is to inform \textit{control}: that is, to help select a maximising action or policy. To focus, for now, on the key question that arises, we restrict ourselves to \textit{two} given policies, $\pi_{1}$ and $\pi_{2}$. Our aim is to furnish an estimator of $U(\pi_{1}, M) - U(\pi_{2}, M)$. How to do so?

The value estimation technique from Section~\ref{subsec:estimatingvaluedifference-estimatingvalue} provides a ready recipe: we can \textit{independently} estimate  $U(\pi_{1}, M)$ and $U(\pi_{2}, M)$ and then supply the difference. Indeed, let $X_{\I}$ denote the resulting estimator (``I'' for ``independent''); below, we specify the process for generating it.
\begin{center}
\framebox{\parbox[c]{0.83\textwidth}{
\noindent\underline{$X_{\I}(M, \pi_{1}, \pi_{2})$.} Sample $M_{1} \sim M$ and $M_{2} \sim M$. Return $U(\pi_{1}, M_{1}) - U(\pi_{2}, M_{2})$.
}}
\end{center}
It is clear that $X_{\I}$ is an unbiased estimator of $U(\pi_{1}, M) - U(\pi_{2}, M)$. In practice, we will typically draw multiple i.i.d.~samples of $X_{\I}$ and compare the average with $0$ to determine whether $\pi_{1}$ or $\pi_{2}$ is superior. In turn, the number of samples we need to reach any fixed level of confidence will scale with the \textit{variance} of $X_{\I}$. Qualitatively, what factors contribute to this \textit{variance}? Naturally, the intrinsic variance of the returns of $\pi_{1}$ and $\pi_{2}$ on $M$ have a role to play. On the other hand, notice that $\pi_{1}$ and $\pi_{2}$ have been evaluated on \textit{independent} samples of $M$---which means $X_{\I}$ will generally have variance on stochastic MDPs even when $\pi_{1} = \pi_{2}$. Can we revise our estimator to eliminate this second component of the variance? Indeed, the founding principle of variance reduction in MDPs~\citep{kearns1999approximate, Ng2000PEGASUSAP}
is to \textit{fix} the random MDP on which the candidate policies are evaluated. In our simplified setting, this amounts to defining a second estimator $X_{\D}$ (``D'' for ``dependent), as below.
\begin{center}
\framebox{\parbox[c]{0.7\textwidth}{
\noindent\underline{$X_{\D}(M, \pi_{1}, \pi_{2})$.} Sample $M_{1} \sim M$. Return $U(\pi_{1}, M_{1}) - U(\pi_{2}, M_{1}).$
}}
\end{center}
$X_{\D}$ is again an unbiased estimator, but are we sure that it will have lower variance than $X_{\I}$? The starting point of this paper is the somewhat surprising \textit{negative} result.
\begin{proposition}
\label{prop:negative}
There exist MDP $M$ and corresponding policies $\pi_{1}, \pi_{2}$ such that $$\var(X_{D}(M, \pi_{1}, \pi_{2})) > \var(X_{I}(M, \pi_{1}, \pi_{2})).$$
\end{proposition}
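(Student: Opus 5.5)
We will prove the proposition by exhibiting an explicit small counterexample in which $U(\pi_{1}, M_{1})$ and $U(\pi_{2}, M_{1})$ are \emph{negatively} correlated when evaluated on a common sample $M_{1} \sim M$. The starting point is the identity
\begin{align*}
\var(X_{\D}) - \var(X_{\I}) = -2\,\cov\bigl(U(\pi_{1}, M_{1}), U(\pi_{2}, M_{1})\bigr),
\end{align*}
which holds because the two estimators have the same marginals: $U(\pi_{1}, M_{1})$ and $U(\pi_{2}, M_{2})$ are each distributed as under $M^{\prime} \sim M$. So it suffices to find $M, \pi_{1}, \pi_{2}$ for which this covariance is strictly negative. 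The key observation is that in the backward process, $\mathrm{next}(s, a, t)$ is sampled for \emph{every} triple, so sharing $M_{1}$ couples the outcomes of $\pi_{1}$ and $\pi_{2}$ wherever their trajectories meet the same random entries of $P^{\prime}$.

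\textbf{Construction.} Take $H = 2$, a single action $a$ at $s_{1}$ (so $\pi_{1}$ and $\pi_{2}$ agree at $t = 1$), and $P(s_{1}, a, 1, \cdot)$ uniform over two states $u$ and $w$, with $R(s_{1}, a, 1) = 0$. At time $2$, let there be two actions $b, c$ with $R(u, b, 2) = 1$, $R(w, b, 2) = 0$, $R(u, c, 2) = 0$, $R(w, c, 2) = 1$, all transitions going to $s_{\top}$. Define $\pi_{1}(\cdot, 2) = b$ and $\pi_{2}(\cdot, 2) = c$. Then on any sampled $M_{1}$, the single random quantity $\mathrm{next}(s_{1}, a, 1)$ determines both utilities: $U(\pi_{1}, M_{1}) = \mathbf{1}[\mathrm{next}=u]$ and $U(\pi_{2}, M_{1}) = 1 - U(\pi_{1}, M_{1})$. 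This gives perfect negative correlation.

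\textbf{Computation.} Under $X_{\D}$, the difference is $2\cdot\mathbf{1}[\mathrm{next}=u] - 1 \in \{\pm 1\}$, so its variance is $1$. Under $X_{\I}$, the difference of two independent Bernoulli$(1/2)$ variables has variance $1/4 + 1/4 = 1/2$. Since $1 > 1/2$, the proposition follows. If one prefers an example with no reward-sign trick, a variant with random rewards would behave the same way, but since $R$ is deterministic here the randomness must enter through transitions, as above.

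The only step needing care is checking that the construction fits the paper's formalism, which is where a mistake could slip in. This means confirming that policies are functions $S \times [H] \to A$ (true here), that the actions available at $s_{1}$ can be restricted, or else that $\pi_{1}(s_{1},1) = \pi_{2}(s_{1},1)$ simply by choice, and that $R$ is known and deterministic. The arithmetic itself is routine.
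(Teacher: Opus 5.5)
Your proof is correct and follows the paper's argument essentially exactly. You use the same reduction to showing $\cov_{M_{1} \sim M}(U(\pi_{1}, M_{1}), U(\pi_{2}, M_{1})) < 0$ and the same two-step MDP (uniform split at $s_{1}$, opposite reward preferences at $t = 2$); your instance is the paper's with $r_{0} = r_{3} = 1$ and $r_{1} = r_{2} = 0$, where the paper's formula $(r_{0} - r_{2})(r_{1} - r_{3})/4$ gives $-1/4$, consistent with your variances $1$ versus $1/2$.
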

\begin{proof}
We apply the identity ``$\var(Y - Z) = \var(Y) + \var(Z) - 2 \cov (Y, Z)$'', to observe
\begin{align}
\var\left(X_{\D}(M, \pi_{1}, \pi_{2})\right) &= \underset{M_{1} \sim M}{\var} \left(U(\pi_{1}, M_{1}) \right) + \underset{M_{1} \sim M}{\var} \left( U(\pi_{2}, M_{1}) \right) \nonumber \\
& \phantom{aaaa} - 2 \underset{M_{1} \sim M}{\cov} \left( U(\pi_{1}, M_{1}), U(\pi_{2}, M_{1}) \right); \label{eq:varxd}\\
\var\left(X_{\I}(M, \pi_{1}, \pi_{2})\right) &= \underset{M_{1} \sim M}{\var} \left( U(\pi_{1}, M_{1}) \right) + \underset{M_{2} \sim M}{\var} 
\left( U(\pi_{2}, M_{2})\right) \nonumber \\
& \phantom{aaaa}  - 2 \underset{M_{1}, M_{2} \sim M}{\cov} \left( U(\pi_{1}, M_{1}), U(\pi_{2}, M_{2}) \right). \label{eq:varxi}
\end{align}
The variance terms on the right-hand sides of \eqref{eq:varxd} and \eqref{eq:varxi} are identical. Also, $U(\pi_{1}, M_{1})$ and $U(\pi_{2}, M_{2})$ are \textit{independent}---making their covariance $0$. To achieve the proof, it suffices to furnish an MDP $M$ and corresponding policies $\pi_{1}, \pi_{2}$ such that $\cov_{M_{1} \sim M} \left( U(\pi_{1}, M_{1}), U(\pi_{2}, M_{1}) \right) < 0$.

Consider the 2-step MDP $M$ from Figure~\ref{fig:counterexample}, which has a minimal set of states and actions to establish our claim. Upon taking action $0$ from the starting state $s_{1}$, the agent transits uniformly at random to $s_{2}$ or $s_{3}$, receiving no reward. From states $s_{2}$ and $s_{3}$ at time step $t = 2$, actions $0$ and $1$ both lead to termination, with respective rewards $r_{0}$, $r_{1}$, $r_{2}$, and $r_{3}$. Our policies $\pi_{1}$ and $\pi_{2}$ vary in their actions at $s_{2}$ and $s_{3}$. They satisfy $\pi_{1}(s_{1}, 1) = 0$, $\pi_{1}(s_{2}, 2) = 0$, $\pi_{1}(s_{3}, 2) = 0$, $\pi_{2}(s_{1}, 1) = 0$, $\pi_{2}(s_{2}, 2) = 1$, $\pi_{2}(s_{3}, 2) = 1$. With this setup, we obtain
\begin{align*}
& \underset{M_{1} \sim M}{\cov} \left( U(\pi_{1}, M_{1}), U(\pi_{2}, M_{1}) \right) \\   
&= \underset{M_{1} \sim M}{\mathbb{E}} \left[ U(\pi_{1}, M_{1}) U(\pi_{2}, M_{1}) \right] - 
\underset{M_{1} \sim M}{\mathbb{E}} \left[ U(\pi_{1}, M_{1}) \right] \underset{M_{1} \sim M}{\mathbb{E}} \left[ U(\pi_{2}, M_{1}) \right] \\
&= \left(\frac{r_{0} r_{1}}{2}  + \frac{r_{2}r_{3}}{2}  \right) - \left( \frac{r_{0}}{2}  + \frac{r_{2}}{2}  \right) \left( \frac{r_{1}}{2} + \frac{r_{3}}{2} \right)
= \frac{(r_{0} - r_{2})(r_{1} - r_{3})}{4}.
\end{align*}
It is apparent that $r_{0}, r_{1}, r_{2}, r_{3}$ can be set to make the expression positive, negative, or zero.
\end{proof}

\begin{figure}[b]
    \centering
\subfloat[MDP in proof of Proposition~\ref{prop:negative}]{
\begin{tikzpicture}[scale=0.75, transform shape, node distance=12mm and 18mm]


\node[state] (s0) {$s_1$};
\node[state, right=of s0, yshift=10mm, xshift=-3mm] (s1) {$s_2$};
\node[state, right=of s0, yshift=-10mm, xshift=-3mm] (s2) {$s_3$};
\node[state, right=of s1, xshift=3mm, yshift=-10mm] (s3) {$s_{\top}$};


\draw[->, thick, red]
(s0) --
node[lab, pos=0.5, yshift=3mm] {$\frac{1}{2},\;0$}
(s1);

\draw[->, thick, red]
(s0) --
node[lab, pos=0.5, yshift=-4.5mm] {$\frac{1}{2},\;0$}
(s2);


\draw[->, thick, black, dashed]
(s1) --
node[lab, pos=0.2, yshift=-2.5mm] {$1,\;r_{1}$}
(s3);

\draw[->, thick, red]
(s1) to[bend left=25]
node[lab, pos=0.5, right=3mm, yshift=1mm] {$1,\;r_{0}$}
(s3);


\draw[->, thick, black, red]
(s2) --
node[lab, pos=0.2, yshift=3.2mm] {$1,\;r_{2}$}
(s3);

\draw[->, thick, black, black, dashed]
(s2) to[bend right=25]
node[lab, pos=0.5, right=3mm, yshift=-1.8mm] {$1,\;r_{3}$}
(s3);

\end{tikzpicture}
\label{fig:counterexample}
}
    \subfloat[$X_{\I}$]{%
{\setlength{\tabcolsep}{2pt}
\begin{tabular}{c|c|c|c|c|c|}
\cline{2-6}
$\pi_{1}$ & \blank & \blank & \blank & \blank & \blank \\
\cline{2-6}
\multicolumn{6}{c}{~} \\
\cline{2-6}
$\pi_{2}$ & \full & \full & \full & \full & \full\\
\cline{2-6} 
\multicolumn{1}{c}{$t =$}
& \multicolumn{1}{c}{$1$} 
& \multicolumn{1}{c}{$2$} 
& \multicolumn{1}{c}{$3$} 
& \multicolumn{1}{c}{$4$}
& \multicolumn{1}{c}{$5$} \\ 
\end{tabular}
}
        \label{fig:xi}%
    }
    \subfloat[$X_{\D}$]{%
{\setlength{\tabcolsep}{2pt}
\begin{tabular}{c|c|c|c|c|c|}
\cline{2-6}
$\pi_{1}$ & \blank & \blank & \blank & \blank & \blank \\
\cline{2-6}
\multicolumn{6}{c}{~} \\
\cline{2-6}
$\pi_{2}$ & \blank & \blank & \blank & \blank & \blank\\
\cline{2-6}
\multicolumn{1}{c}{$t =$}
& \multicolumn{1}{c}{$1$} 
& \multicolumn{1}{c}{$2$} 
& \multicolumn{1}{c}{$3$} 
& \multicolumn{1}{c}{$4$}
& \multicolumn{1}{c}{$5$} \\ 
\end{tabular}
}
        \label{fig:xd}%
    }
    \subfloat[$X_{\DD}$]{%
{\setlength{\tabcolsep}{2pt}
\begin{tabular}{c|c|c|c|c|c|}
\cline{2-6}
$\pi_{1}$ & \blank & \blank & \blank & \blank & \blank \\
\cline{2-6}
\multicolumn{6}{c}{~} \\
\cline{2-6}
$\pi_{2}$ & \full & \full & \blank & \blank & \blank\\
\cline{2-6}
\multicolumn{1}{c}{$t =$}
& \multicolumn{1}{c}{$1$} 
& \multicolumn{1}{c}{$2$} 
& \multicolumn{1}{c}{$3$} 
& \multicolumn{1}{c}{$4$}
& \multicolumn{1}{c}{$5$} \\ 
\end{tabular}
}
        \label{fig:xdd}%
    }
\caption{Figure (a) shows the MDP defined in the proof of Proposition~\ref{prop:negative}; each transition is labeled with ``probability, reward''. Figures (b), (c), and (d) pictorially illustrate the sampling process under $X_{\I}$, $X_{\D}$, and $X_{\DD}$ (introduced in Section~\ref{sec:depth-baseddependenceforrollouts}), respectively. The figures depict a task with horizon 5, and assume $\pi_{1}$ and $\pi_{2}$ are identical for steps $t = 3, 4, 5$ (that is, they \textit{agree} after 2 steps). Each small square is a time step. If corresponding squares for the policies are blank, then both policies use the same transition sample for each (state, action) pair at that time step. When one square has a cross, it means the policies are evaluated on different (independently generated) samples for that time step.}
\label{fig:sec1figures}
\end{figure}

The MDP used in the proof can be expanded to have any number of states and actions, and also a longer horizon, while preserving the variances of $X_{\D}$ and $X_{\I}$. Nonetheless, the small MDP itself conveys the intuition behind how $X_{\D}$ can get to have larger variance than $X_{\I}$. Notice that for this to happen---that is, for 
$\cov_{M_{1} \sim M} \left( U(\pi_{1}, M_{1}), U(\pi_{2}, M_{1}) \right)$ to be negative---we must have  different signs for $(r_{0} - r_{2})$ and $(r_{1} - r_{3})$. The intuition is that if $\pi_{1}$ stands to benefit by starting at $s_{2}$ instead of $s_{3}$, then $\pi_{2}$ stands to lose by starting at $s_{2}$ instead of $s_{3}$. 
This sort of structure is not common in practical tasks---typically, the order between two states' values stays the same for different policies. In our upcoming experiments on real-world tasks, $X_{\D}$ invariably delivers better results than $X_{\I}$. 

Our investigation in this section leaves us with a question to resolve: \textit{is there an unbiased estimator of value difference, which makes no more calls to the simulator than $X_{\I}$, but which is guaranteed to have a smaller variance than $X_{\I}$ on \textit{all} MDPs?} We obtain a positive answer for a special class of policies, which, interestingly, arises quite naturally in rollout-based planning.

\section{Depth-based Dependence for Rollouts}
\label{sec:depth-baseddependenceforrollouts}

In this section, we propose $X_{\DD}$, a new estimator of value difference whose variance is upper-bounded by that of $X_{\I}$ for all pairs of policies $\pi_{1}, \pi_{2}$, on all MDPs. Further, if $\pi_{1}$ and $\pi_{2}$ share a special structure, then $X_{\DD}$ exploits the structure to achieve a strictly lower variance than $X_{\I}$. Concretely, we say policies $\pi_{1}, \pi_{2}: S \times [H] \to A$ ``agree after $d$ steps'', for some $d \in [H]$, if for \textit{all} $t \in \{d + 1, d + 2, \dots, H\}$ the policies take the same action at every state at time step $t$: that is, for $s \in S$, $\pi_{1}(s, t) = \pi_{2}(s, t)$. The policies are free to take different actions at states when $t \leq d$.

Our difference estimator $X_{\DD}$, given policies $\pi_{1}$ and $\pi_{2}$ that agree after $d$ steps, uses the knowledge of $d$, as follows. For transitions up to (and including) time step $d$, the policies are simulated independently (like in $X_{\I}$). However, for time steps $d + 1$ and after, both policies are evaluated on the same transitions (like in $X_{\D}$). The corresponding backward process to define $X_{\DD}$ works out as follows. Suppose $M = (S, A, P, R, H, s_{1})$. Let $M_{1} \sim M$ have transition function $P_{1}$, and $M_{2} \sim M$ have transition function $P_{2}$. Let $M_{3}$ be the MDP $(S, A, P_{3}, R, H, s_{1})$, where for $(s, a, t) \in S \times A \times [H]$ and $s^{\prime} \in S \cup \{s_{\top}\}$, $$P_{3}(s, a, t, s^{\prime}) = \begin{cases}
P_{2}(s, a, t, s^{\prime}) & \text{if } t \leq d,\\
P_{1}(s, a, t, s^{\prime}) & \text{if } t > d.
\end{cases}$$
In other words, $M_{3}$ uses the same samples as $M_{2}$ for $t \leq d$, and the same samples as $M_{1}$ for $t > d$. For brevity we encode this relation as $M_{3} = M_{2}(1:d) \cdot M_{1}(d + 1:H)$. We have our definition of $X_{\DD}$ (``DD'' since the MDPs on which $\pi_{1}$ and $\pi_{2}$ are evaluated are ``dependent'' after ``depth'' $d$).
\begin{center}
\framebox{\parbox[c]{0.92\textwidth}{
\noindent\underline{$X_{\DD}(M, \pi_{1}, \pi_{2}, d)$.} Sample $M_{1} \sim M$ and $M_{2} \sim M$. Let $M_{3} = M_{2}(1:d) \cdot M_{1}(d + 1:H)$.\\
Return $U(\pi_{1}, M_{1}) - U(\pi_{2}, M_{3})$.
}}
\end{center}
Figures \ref{fig:xi},\ref{fig:xd}, and \ref{fig:xdd} visually convey the difference between $X_{\I}$, $X_{\D}$, and $X_{\DD}$. Observe that policies $\pi_{1}$ and $\pi_{2}$ defined for the MDP in Figure~\ref{fig:counterexample} (in the preceding proof) do not agree for the entire horizon. In this case, $X_{\DD}$ becomes identical to $X_{\I}$. Note, however, that the estimator $X_{\D}$ can have a larger or smaller variance depending on the rewards. In contrast, $X_{\DD}$ ``plays safe'', by assessing the policies' structure. On MDPs in which policies agree after depth $d < H$, and reachable states at $d + 1$ have variance in their returns, $X_{\DD}$ can strictly improve over $X_{\I}$.
\begin{theorem}
\label{thm:xdd}
Let $M = (S, A, T, R, H, s_{1})$ be an arbitrary MDP. Suppose policies $\pi_{1}, \pi_{2}: S \times [H] \to A$ agree after $d$ steps for some $d \in [H]$. Then 
\begin{itemize}[leftmargin=0.8cm]
\setlength{\parindent}{3em}
\item[(i)] $\var(X_{\DD}(M, \pi_{1}, \pi_{2}), d) \leq \var(X_{I}(M, \pi_{1}, \pi_{2})), \text{ and moreover}$
\item[(ii)] the inequality in (i) is strict if $\pi_{1}$ and $\pi_{2}$ both have a positive probability of reaching from starting state $s_{1}$ some state $s_{d+1}$ after $d$-steps, such that the $(H - d)$-step return from $s_{d+1}$ (identical under $\pi_{1}$ and $\pi_{2}$) has strictly positive variance.
\end{itemize}
\end{theorem}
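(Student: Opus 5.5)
Following the proof of Proposition~\ref{prop:negative}, I will apply the identity $\var(Y - Z) = \var(Y) + \var(Z) - 2\cov(Y, Z)$ with $Y = U(\pi_{1}, M_{1})$ and $Z = U(\pi_{2}, M_{3})$. Since $M_{3} = M_{2}(1:d)\cdot M_{1}(d+1:H)$ has exactly the distribution of a fresh sample from $M$ (the per-$(s,a,t)$ samples are independent across time steps), $\var(U(\pi_{2}, M_{3})) = \var_{M_{2}\sim M}(U(\pi_{2}, M_{2}))$. The variance terms therefore match those of $X_{\I}$ in \eqref{eq:varxi}, whose covariance term is $0$. Both parts of the theorem then reduce to showing $\cov(U(\pi_{1}, M_{1}), U(\pi_{2}, M_{3})) \geq 0$, with strict inequality under the hypothesis of (ii).

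To analyse this covariance, I will condition on the samples of $M_{1}$ and $M_{2}$ for time steps $1, \dots, d$; call this $\sigma$-algebra $\mathcal{F}$. Given $\mathcal{F}$, the trajectory of $\pi_{1}$ on $M_{1}$ is deterministic up to the state $s^{1}_{d+1}$ it reaches at step $d+1$, and likewise $\pi_{2}$ on $M_{3}$ reaches some $s^{2}_{d+1}$. Write $U(\pi_{1}, M_{1}) = A_{1} + G(s^{1}_{d+1})$ and $U(\pi_{2}, M_{3}) = A_{2} + G(s^{2}_{d+1})$. Here $A_{i}$ are the $\mathcal{F}$-measurable first-$d$-step rewards, and $G(s) = V^{\pi}_{M_{1}}(s, d+1)$ is the tail return on $M_{1}$ from $s$. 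This tail is the \emph{same} random function for both policies, because they agree after $d$ steps and $M_{3}$ uses $M_{1}$'s samples for $t > d$. Moreover, $G$ is independent of $\mathcal{F}$.

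By the law of total covariance,
$\cov(Y,Z) = \mathbb{E}[\cov(Y,Z \mid \mathcal{F})] + \cov(\mathbb{E}[Y\mid\mathcal{F}], \mathbb{E}[Z\mid\mathcal{F}])$.
The second term vanishes. Indeed, $\mathbb{E}[Y\mid\mathcal{F}] = A_{1} + V^{\pi}_{M}(s^{1}_{d+1}, d+1)$ depends only on $M_{1}$'s first $d$ steps, while $\mathbb{E}[Z\mid\mathcal{F}]$ depends only on $M_{2}$'s first $d$ steps, and these are independent. For the first term, $\cov(Y, Z\mid\mathcal{F}) = C(s^{1}_{d+1}, s^{2}_{d+1})$, where $C(s, s') = \cov_{M_{1}}(G(s), G(s'))$ is the covariance kernel of the tail-return process. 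Since $s^{1}_{d+1}$ and $s^{2}_{d+1}$ are independent with laws $\mu_{1}$ and $\mu_{2}$ (the step-$(d+1)$ state distributions of $\pi_{1}$ and $\pi_{2}$), we get $\cov(Y,Z) = \sum_{s,s'} \mu_{1}(s)\mu_{2}(s') C(s,s')$.

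The main obstacle is that this bilinear form with $\mu_{1} \neq \mu_{2}$ need not be nonnegative for a general positive semidefinite $C$. The unconditional-independence step itself looks fine; what fails is the sign. Indeed, Proposition~\ref{prop:negative}'s counterexample in disguise (with $d=1$ and tail returns at $s_{2}, s_{3}$ anticorrelated) could make it negative, unless the structure of a common sampled deterministic MDP forbids this. So I would first try to show directly that $C(s,s') \ge 0$ for all pairs of states. The plan is a backward induction on $t$ from $H$ down to $d+1$. Tail returns $V^{\pi}_{M_{1}}(s,t)$ are built from shared independent next-state samples, so two states either draw the same sample at a common $(s'',a,t')$ or draw independent ones. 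Conditioning on the first transitions out of $s$ and $s'$, I would then try to express $C$ as an expectation of $C$ at the successor pair plus a variance term at coincidences. If $C(s,s')\ge0$ holds, part (i) follows. For part (ii), positivity follows from the diagonal term $\mu_{1}(s)\mu_{2}(s)\var(G(s)) > 0$ at the hypothesised state $s_{d+1}$, together with nonnegativity of all off-diagonal terms. If the induction fails, I would fall back to checking whether the intended argument has $M_{3}$ built so that the tail covariance reduces to a diagonal expression.
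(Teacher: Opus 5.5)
Your argument is correct. The step you flagged as the main obstacle closes by exactly the induction you sketch, so no fallback is needed. Let $\pi$ be the common tail policy and set $C_t(s,s') = \cov\bigl(V^{\pi}_{M_1}(s,t), V^{\pi}_{M_1}(s',t)\bigr)$ for $t > d$. When $s \neq s'$, the samples $\mathrm{next}(s,\pi(s,t),t)$ and $\mathrm{next}(s',\pi(s',t),t)$ belong to distinct triples. They are therefore independent of each other and of all later samples, which gives
\[
C_t(s,s') = \sum_{u,u' \in S \cup \{s_{\top}\}} P(s,\pi(s,t),t,u)\,P(s',\pi(s',t),t,u')\,C_{t+1}(u,u').
\]
Meanwhile $C_t(s,s) = \var\bigl(V^{\pi}_{M_1}(s,t)\bigr) \geq 0$ and $C_{H+1} \equiv 0$, so $C \geq 0$ entrywise. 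The anticorrelation of Proposition~\ref{prop:negative} cannot appear here, because it needs two \emph{different} policies to read the same sample. In the tail, a shared sample is read only when both copies occupy the same $(s,t)$; there they take the same action and coincide thereafter, so they contribute a variance.

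This recursion is exactly the $t > d$ part of the paper's proof. The paper shows $\cov\bigl(V^{\pi_1}_{M_1}(s,t), V^{\pi_2}_{M_3}(\overline{s},t)\bigr) \geq 0$ for all $s, \overline{s}, t$ by one backward induction. Case~1 ($t > d$, $s = \overline{s}$) is a variance. Case~2 ($t \leq d$ or $s \neq \overline{s}$) writes the covariance as a probability-weighted sum of step-$(t+1)$ covariances, using independence of the step-$t$ samples.

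Your route differs only on the first $d$ steps. Instead of continuing the induction down to $t = 1$, you condition on $\mathcal{F}$ and apply the law of total covariance once. This yields $\cov(Y,Z) = \sum_{s,s'}\mu_1(s)\mu_2(s')C(s,s')$, which is what the paper's Case~2 gives when unrolled over $t = 1, \dots, d$.

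The explicit formula buys a cleaner proof of (ii): the diagonal term $\mu_1(s_{d+1})\mu_2(s_{d+1})\var(G(s_{d+1})) > 0$, plus nonnegativity of the remaining terms. The paper only says to track positive covariances through the induction. Your version also makes explicit two facts the paper uses tacitly. First, $M_3 \sim M$, so the variance terms match those of $X_{\I}$. Second, the bilinear form with $\mu_1 \neq \mu_2$ requires entrywise nonnegativity of $C$, not mere positive semidefiniteness. The paper's version is more uniform: a single induction with no conditioning machinery.
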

\begin{proof}
We first prove (i). We use ``$M_{3}$'' as shorthand for the MDP $M_{2}(1:d) \cdot M_{1}(d + 1:H)$, which is derived from $M_{1}$ and $M_{2}$. As in the proof of Proposition~\ref{prop:negative}, we begin by splitting $X_{\DD}$ and $X_{\I}$ into variance and covariance terms as in \eqref{eq:varxd} and \eqref{eq:varxi}. For this proof, we have to show that $\cov_{M_{1} \sim M, M_{2} \sim M} 
\left( U(\pi_{1}, M_{1}), U(\pi_{1}, M_{3}) \right) \geq 0$. We prove the following more general claim.
\begin{align}
\underset{M_{1} \sim M, M_{2} \sim M}{\cov} \left( V^{\pi_{1}}_{M_{1}}(s, t), V^{\pi_{2}}_{M_{3}}(\overline{s}, t) \right) 
&\geq 0 \text{ for all } s, \overline{s} \in S \text{ and } t \in [H].
\label{eq:genclaim}
\end{align}
The proof is by induction on $t$. The base case of $t = H + 1$ is trivially true since $V^{\pi_{1}}_{M_{1}}(s, H + 1) = V^{\pi_{2}}_{M_{3}}(\overline{s}, H + 1) = 0$. Now suppose that \eqref{eq:genclaim} is true for $t + 1$. Our proof splits into two cases.

\paragraph{Case 1.} Suppose $t > d$ and $s = \overline{s}$. Since $\pi_{1}$ and $\pi_{2}$ agree after $d$ steps, and since $s = \overline{s}$, we have that $\pi_{1}(s) = \pi_{2}(\overline{s})$. By our design of $X_{\DD}$, transitions in $M_{1}$ and $M_{3}$ are identical for steps $d + 1$ onwards. Hence $V^{\pi_{1}}_{M_{1}}(s, t)$ and $V^{\pi_{2}}_{M_{3}}(\overline{s}, t)$ are the same random variable, and the LHS reduces to $\var_{M_{1} \sim M} V^{\pi_{1}}_{M_{1}}(s, t)$, which must be non-negative.

\paragraph{Case 2.} Consider the complementary case that $t \leq d$ or $s \neq \overline{s}$. Recall the definition $\cov(Y, Z) = \mathbb{E}[YZ] - \mathbb{E}[Y]\mathbb{E}[Z]$. If we apply the Bellman equation in \eqref{eq:bellman} to expand $V^{\pi_{1}}_{M_{1}}(s, t)$ and $V^{\pi_{2}}_{M_{3}}(\overline{s}, t)$, and then cancel out terms containing the reward, we are left with
\begin{align*}
&\underset{M_{1}, M_{2} \sim M}{\cov} \left( V^{\pi_{1}}_{M_{1}}(s, t), V^{\pi_{2}}_{M_{3}}(\overline{s}, t) \right) \\
&= \sum_{s^{\prime}, \overline{s}^{\prime} \in S \cup \{s_{\top}\}} 
\left(
\underset{M_{1}, M_{2} \sim M}{\mathbb{E}} \left[P_{1}(s, \pi_{1}(s, t), t, s^{\prime}) V^{\pi_{1}}_{M_{1}}(s^{\prime}, t + 1) P_{3}(\overline{s}, \pi_{2}(\overline{s}, t), t, \overline{s}^{\prime}) V^{\pi_{2}}_{M_{3}}(\overline{s}^{\prime}, t + 1) \right] \right. \\
& - \left. \underset{M1, M2 \sim M}{\mathbb{E}}\left[P_{1}(s, \pi_{1}(s, t), t, s^{\prime}) V^{\pi_{1}}_{M_{1}}(s^{\prime}, t + 1) \right]
\underset{M_{1}, M_{2} \sim M}{\mathbb{E}}\left[ P_{3}(\overline{s}, \pi_{2}(\overline{s}, t), t, \overline{s}^{\prime}) V^{\pi_{2}}_{M_{3}}(\overline{s}^{\prime}, t + 1) \right] \right).\\
\end{align*}
If $t \leq d$, notice that we have ensured independent samples for evaluating our policies: $\pi_{1}$ on $M_{1}$ and $\pi_{2}$ on $M_{3}$. Samples are anyway independent if $s \neq \overline{s}$, even if $t \geq d$. Thus, $P_{1}(s, \pi_{1}(s, t), t, s^{\prime})$ and $P_{3}(\overline{s}, \pi_{2}(\overline{s}, t), t, \overline{s}^{\prime})$ are independent of each other. They are also independent of $V^{\pi_{1}}_{M_{1}}(s^{\prime}, t + 1)$ and $V^{\pi_{2}}_{M_{3}}(\overline{s}^{\prime}, t + 1)$, which use samples only from $t + 1$ and onwards. Applying this independence and substituting the expected values of the transition probabilities, we obtain
\begin{align*}
&\underset{M_{1} \sim M, M_{2} \sim M}{\cov} \left( V^{\pi_{1}}_{M_{1}}(s, t), V^{\pi_{2}}_{M_{3}}(\overline{s}, t) \right)\\
&=
\sum_{s^{\prime}, \overline{s}^{\prime} \in S \cup \{s_{\top}\}}
P(s, \pi_{1}(s, t), t, s^{\prime}) P(\overline{s}, \pi_{2}(\overline{s}, t), t, \overline{s}^{\prime}) \underset{M_{1}, M_{2} \sim M}{\cov} \left( V^{\pi_{1}}_{M_{1}}(s^{\prime}, t + 1), V^{\pi_{2}}_{M_{3}}(\overline{s}^{\prime}, t + 1) \right).
\end{align*}
By our induction hypothesis, the covariance quantities in the right-hand side are non-negative; since they are scaled by non-negative factors and summed up, the left-hand side must also be non-negative, completing the proof of (i). On the other hand, if even a single covariance term in the right-hand side is strictly positive, and also multiplied by positive probabilities, it would yield a positive left-hand side. Positive covariance originates from the variance terms discussed in case 1: that is, when some state with a positive variance in its return is reached at $t = d + 1$ by both policies. Tracking the progress of positive covariances through our inductive argument delivers the proof of (ii).
\end{proof}
In summary, $X_{\DD}$ dominates $X_{\I}$ in the sense that its variance is never larger on any MDP, and for certain MDPs and sets of policies, its variance is strictly smaller. As indicated by the proof above, the variance that $X_{\DD}$ eliminates is from the returns of states reached after $d$ steps, at which point the two policies take identical decisions. A common setting in which policies do agree after a finite number of steps is simulation-based planning with rollouts, which we consider in the next section. Indeed, our use of the term ``depth'' to describe the number of steps $d$ beyond which the policies agree is in anticipation of this application.

Whereas so far we have primarily used the backward process to define our estimators, we shall furnish an equivalent forward process that is more suited for algorithms. As we proceed, it is also worth bearing in mind that the (full-dependence-) estimator $X_{\D}$ could yet outperform $X_{\DD}$ and $X_{\I}$ on specific MDPs; all we have shown in Proposition~\ref{prop:negative} is that $X_{\D}$ must be worse than $X_{\I}$ (hence also worse than $X_{\DD}$) on some MDPs.

\section{Simulation-based Planning with Rollouts}
\label{sec:simulation-basedplanningwithrollouts}

Planning is the cognitive process by which an agent ``imagines'' the possible trajectories resulting from different behaviours, and selects an action that is calculated to lead to the most promising future. In ``decision-time planning''~\citep[see Chapter 8]{sutton2018reinforcement}, the agent plans towards its action selection at every time step, usually only getting a limited amount of time for computation. Suppose the agent is in state $s$ at time step $t$. A natural formulation of a ``good'' action would be one that leads to maximum state value $V(s, t)$, defined recursively by:
\begin{align}
V(s, t) &= \max_{a \in A}  \left( R(s, a, t) + \sum_{s^{\prime} \in S} P(s, a, t, s^{\prime}) V(s^{\prime}, t + 1) \right).
\label{eq:dtp}
\end{align}
\begin{enumerate}
\item The first challenge in identifying the maximising action is computational. Since the computation branches by a factor of $|A|$ at every reachable state, its tree structure has to be curtailed to some manageable ``depth'' $d$. This leaves the agent with having to \textit{approximate} $V(s^{\prime}, t + d)$. 
\item The second challenge, which we have already discussed, is the non-availability of the probabilities in $P$. Rather, $P$ is only accessible by \textit{sampling}.
\end{enumerate}


Simulation-based planning with rollouts~\citep{bertsekas1997rollout,bertsekas1999rollout,bertsekas2013rollout} is an elegant approach to simultaneously address both challenges. First, $V(s^{\prime}, t + 1)$ is approximated by the value of $(s^{\prime}, t + 1)$ under a ``rollout'' policy $\pi_{\text{R}}$. Rollout policies are quick to implement at each state: for example, they could be rules \citep{bertsekas1999rollout}, or a dot product between state features as weights \citep{silver2016mastering}. Second, although the exact function in \eqref{eq:dtp} involves maximising and averaging, it can still be approximated using simulation. The well-known UCT algorithm \citep{kocsis2006bandit} is a common choice for concentrating the sampling around decisions that are likely to figure in maximising subtrees, and avoiding wasteful simulations.

An alternative to using rollouts is to approximate $V(s^{\prime}, t + d)$ using an evaluation function \citep[see Chapter 3]{russell2010ai}, which can itself be manually tuned \citep{campbell2002deep}, or learned \citep{silver2016mastering}. On the one hand, evaluation functions are usually faster to compute than generating multiple trajectories of rollout policies and averaging the returns. On the other hand, the latter approach may eventually result in better decisions being taken, in part because it yields an unbiased estimate of the rollout policy. The need to balance between computational speed and decision quality stands out in the original AlphaGo program \citep{silver2016mastering}, which approximates 
$V(s^{\prime}, t + d)$ with a convex combination of a state-evaluation and a rollout average. However, subsequent iterations of the program \citep{silver2017mastering, silver2018general} have relied solely on a learned evaluation function.

In the last 1--2 decades, simulation-based planning has become a mainstay of agents operating in a variety of domains. Table~\ref{tab:applications} provides a brief snapshot. Notice that, indeed, many implementations use rollouts. It is in this context that our estimators from the previous sections become relevant.

\begin{table}[t]
\centering
\caption{Representative applications of simulation-based planning.}
\label{tab:applications}
\resizebox{\textwidth}{!}{%
\begin{tabular}{|c|c|c|c|}
\hline
 \textbf{Application} & \textbf{Reference} & \textbf{Uses rollouts} & \textbf{Uses evaluation function} \\ \hline
Real-time strategy games & \cite{balla2009uct} & Yes & No \\ \hline
Go & \cite{silver2016mastering} & Yes & Yes \\ \hline
Chess, Go \& Shogi & \cite{silver2017mastering, silver2018general} & No & Yes \\ \hline
Task \& Motion planning & \cite{paxton2017combining} & Yes & No \\ \hline
Vehicle routing & \cite{mandziuk2017uct} & Yes & No \\ \hline
Chemical synthesis & \cite{segler2018planning} & Yes & No \\ \hline
Autonomous driving & \cite{hoel2019combining} & No & Yes \\ \hline
Task scheduling & \cite{hu2019spear} & Yes & No \\ \hline
Forest management & \cite{neto2020multi} & Yes & No \\ \hline
Trading \& Hedging & \cite{vittori2021monte} & Yes & No \\ \hline
Robot path planning & \cite{dam2022monte} & No & Yes \\ \hline
Fast matrix multiplication & \cite{fawzi2022discovering} & No & Yes \\ \hline
Fast sorting & \cite{mankowitz2023faster} & No & Yes \\ \hline
Path replanning & \cite{trotti2024path} & Yes & No \\ \hline
\end{tabular}%
}
\end{table}
















\subsection{Efficient Forward Process}
\label{subsec:simplan:efficientforwardprocess}

Our formulation of the value difference estimator in sections \ref{sec:estimatingvaluedifference} and \ref{sec:depth-baseddependenceforrollouts} was to select the better of \textit{two} policies, but the same principle generalises to selecting the best from a larger set of policies. Moreover, one usually executes each policy multiple times and averages the returns to cut down variance while estimating value. Concretely, suppose we aim to select the best among $m$ policies after running each policy $n$ times. In the ``independent'' setting, every policy would be evaluated on its own $n$ independently-sampled MDPs, while in the ``dependent'' setting, all $m$ policies would get evaluated on the same set of $n$ independently-sampled MDPs. Meanwhile, if all the policies agree after $d$ steps, the ``depth-dependent'' setting would require each policy to have its own $n$ $d$-horizon MDPs for $d$ steps, then revert to a common set of $n$ ($H - d$)-horizon MDPs for steps $d + 1$ to $H$.

In practice, simulations are performed using (pseudo)random number generators (RNGs), which can be controlled by setting a seed. (1) Initialised with the same seed, an RNG produces the same deterministic outcome; (2) the process generating the outcome for any seed is \textit{independent} of the process emanating from any other seed. Although it is customary to seed RNGs with integers, we may equivalently use strings for seeding, since strings can be uniformly hashed to a range of integers. With this setup, all we have to do to implement independent, dependent, or depth-dependent sampling is to seed each call to the RNG appropriately. The pseudocode below shows the forward process to obtain a random return from a policy under the three sampling regimes; the variable ``simulationIndex'' varies from $1$ to $n$. Policies may be encoded as distinct numbers or state-to-action mappings; including the policy index in the seed is the main difference between the regimes.
\begin{center}
\framebox{\parbox[c]{0.88\textwidth}{
\noindent\underline{Evaluate($M$, $\pi$, $d$, simulationIndex, estimator).}// Assume that $M = (S, A, P, R, H, s_{1})$.\\\\
$s \gets s_{1}$.\\
$\rho \gets 0$.\\
For $t = 1, 2, \dots, H$:\\
\phantom{aaaa}$a \gets \pi(s, t)$.\\
\phantom{aaaa}$\rho \gets \rho + R(s, a)$.\\
\phantom{aaaa}seed = string(s) + string(a) + string(t) + string(simulationIndex).\\
\phantom{aaaa}If (estimator == ``independent'') or (estimator == ``depth-dependent'' and $t \leq d$ ):\\
\phantom{aaaaaaaa}seed = seed + string($\pi$).\\
\phantom{aaaa}$s^{\prime} \gets \text{sample}(P(s, a, t, \cdot), \text{seed})$.\\
\phantom{aaaa}$s \gets s^{\prime}$.\\
Return $\rho$.
}}
\end{center}
The usage of a common seed for different policies makes ours an instance of the general principle of using ``common random numbers'' \citep{Jucker1975POLICYCOMPARINGSE,Glasserman1992SomeGA}. We assume that an outer loop of simulation-based planning invokes the routine above with the appropriate setting of $d$. For fixed $n$ and $m$, we expect the probability of identifying the best policy to be consistent with the variance of our estimators, as analysed in the previous sections. For algorithms such as UCT that explore the $d$-depth search tree dynamically---and hence do not sample policies equally---it is less apparent how estimation and control interact. For now, our assessment of the dynamic setting comes from experiments; we defer theoretical analysis to future work.

\subsection{Empirical Evaluation}
\label{subsec:simplan:empiricalevaluation}

We conduct preliminary experiments on a synthetic MDP with $|S|=7$ states, $|A|=4$ actions, and horizon $H=20$. The transition probabilities $P$ and rewards $R$ are generated uniformly at random from $[0, 1]$, with the former then normalised appropriately.

For our first experiment, we randomly generate a fixed set of $m = 100$ deterministic policies, which all agree after the initial $d=2$ steps. We run each policy $n$ times, and select the policy with the best estimated utility. Figure~\ref{fig:fixed-synthetic} shows the true value of the selected policy as a function of the number of simulated episodes $n$. As predicted, depth-dependent seeding consistently outperforms independent seeing; both methods converge to the optimal value as $n$ increases.

In our second experiment, we implement depth-limited UCT for decision-time planning on the same MDP. At each decision step, UCT is invoked, and the root action with the highest action value is executed. The values of the root actions are estimated using simulations. In each simulation, actions are selected on the basis of UCB values up to depth $d=2$; thereafter, actions are chosen according to a uniform random rollout policy. At each step, the UCT algorithm has to pick a single \textit{action}, rather than a policy, to execute, before replanning for the next step. While planning, the ``policy'' following this action is constantly updated to balance exploration with exploitation up to depth $d$. In our simulations with ``independent'' and ``depth-dependent'' seeding, we pass only the root action, rather than an entire policy, to determine the seed. Although this approach does not find theoretical support, it still results in clear empirical gains, seen in 
Figure~\ref{fig:uct-synthetic}.
Although all three methods perform better with more simulations, the advantage of (depth-)dependence is pronounced when the number of simulations is small, inevitable when the agent's ``think time'' is limited \citep{zilberstein1996using}.

In both our experiments, we notice that the dependent seeding scheme outperforms the independent variant and is indistinguishable from the depth-dependent variant. This is not surprising. However, to be sure, we also compare the schemes for selecting between $\pi_{1}$ and $\pi_{2}$ in the MDP constructed in the proof of Proposition~\ref{prop:negative}, taking care to set rewards to achieve a negative covariance under $X_{\D}$. Since $\pi_{1}$ and $\pi_{2}$ do not agree for the entire horizon of $H = 2$, the independent and depth-dependent variants are identical. As expected, we observe that the dependent variant performs worse.

We achieve consistent results in our experiments (not shown here) when we vary $|S|$, $|A|$, $H$, and $d$. We now progress from our synthetic setup to applications of interest to practitioners. The applications presented in the next two sections both require planning under significant stochasticity.


\begin{figure}[b]
    \centering
    
    \begin{subfigure}[t]{0.32\textwidth}
        \centering
        \includegraphics[width=\linewidth]{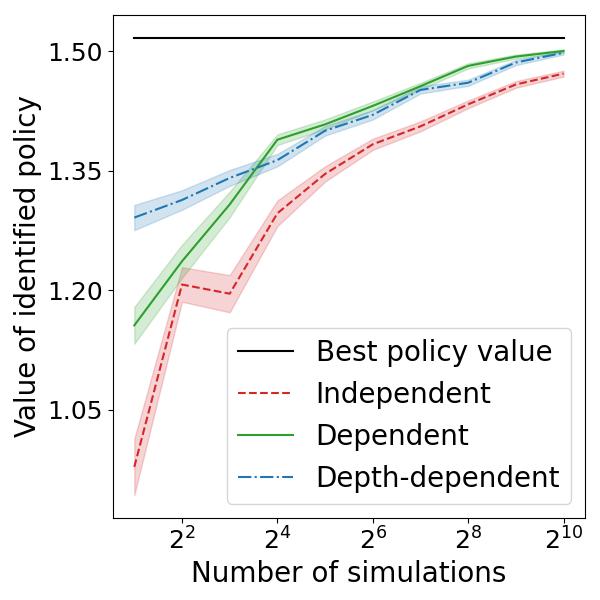}
        \caption{Synthetic MDP: Fixed policies.}
        \label{fig:fixed-synthetic}
    \end{subfigure}
    \hfill
    \begin{subfigure}[t]{0.32\textwidth}
        \centering
        \includegraphics[width=\linewidth]{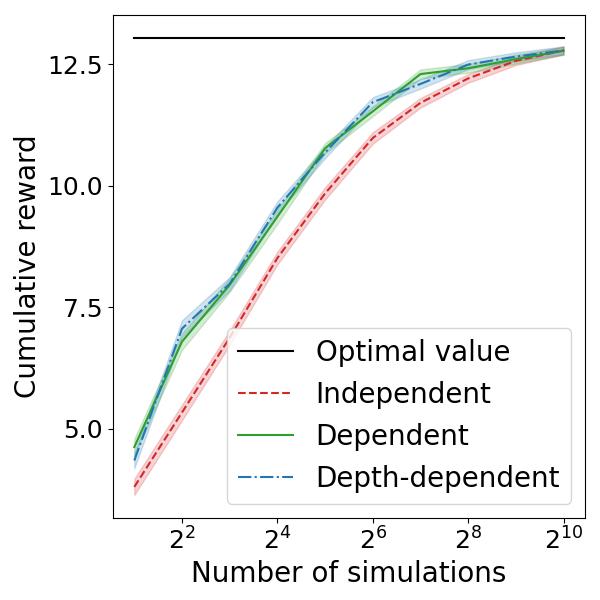}
        \caption{Synthetic MDP: UCT.}
        \label{fig:uct-synthetic}
    \end{subfigure}
    \hfill
    \begin{subfigure}[t]{0.32\textwidth}
        \centering
        \includegraphics[width=\linewidth]{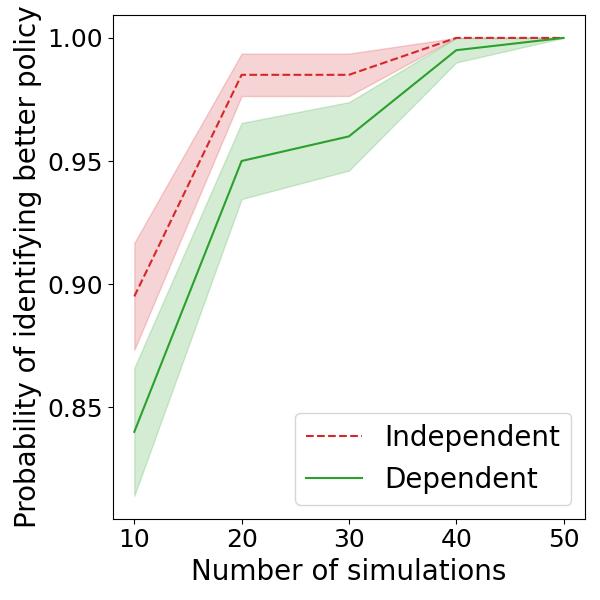}
        \caption{MDP from Figure~\ref{fig:counterexample}: $\pi_{1}, \pi_{2}$.}
        \label{fig:counter-results}
    \end{subfigure}
    
    \caption{Performance metrics against the number of simulations on synthetic tasks. Results (here and in Figure \ref{fig:ftvaf_res}) are averages from $200$ runs, and show one standard error. In Figure (c), the rewards are set to 
    $r_0=2, r_1=4, r_2=3, r_3=2$, which gives $X_{\D}$ a larger variance than $X_{\I}$. }
    \label{fig:syn_results}
\end{figure}

\section{Payouts for a Fixed-Term Variable Annuity Fund}
\label{sec:ftvaf}


A fixed-term variable annuity fund (FTVAF)~\citep{bacinello2011variable} is a financial product that enables retirees to invest in high-yield, high-risk assets such as stocks, which are typically excluded from conventional retirement schemes due to their volatility. In this collective investment scheme, participants contribute to a pooled fund and, in return, receive post-retirement annual payments that vary based on the performance of the underlying investments. Unlike traditional lifelong annuity products, which provide fixed payments, FTVAF operates over a predetermined, fixed \textit{term}, and features variable annual payments. Variable annuities offer the potential for higher payouts.
Since the fund targets retirees, there is a significant probability that some members might pass away during the fund's term. In such scenarios, their beneficiaries receive a proportional sum of the corpus. The challenge in FTVAF management is to negotiate its significantly stochastic environment, made so both by market volatility~\citep{bacinello2011variable} and by mortality~\citep{aalen1992modelling}.

Managing an FTVAF is a sequential decision-making problem where, every year, we are required to choose how much to pay as annuity to the members of the fund. The primary objective in FTVAF management is to maximise the total annuity received by members while they are alive, rather than maximising the residual wealth passed on to beneficiaries. When a member dies, their share of the fund ($\frac{1}{N}$ of the total corpus, if there are $N$ members) is distributed to their beneficiaries, but this does not factor into the main optimisation goal. The core challenge is to determine an optimal strategy for distributing the initial corpus $W_1$ as annuities over the fund’s lifespan, accounting for the randomness in both investment returns and member mortality.
A compact snapshot of the problem is given in Figure \ref{fig:ftvaf_flow}.

\begin{figure}[b]
    \centering
    \begin{subfigure}[t]{0.57\textwidth}
        \centering
        \includegraphics[width=\linewidth]{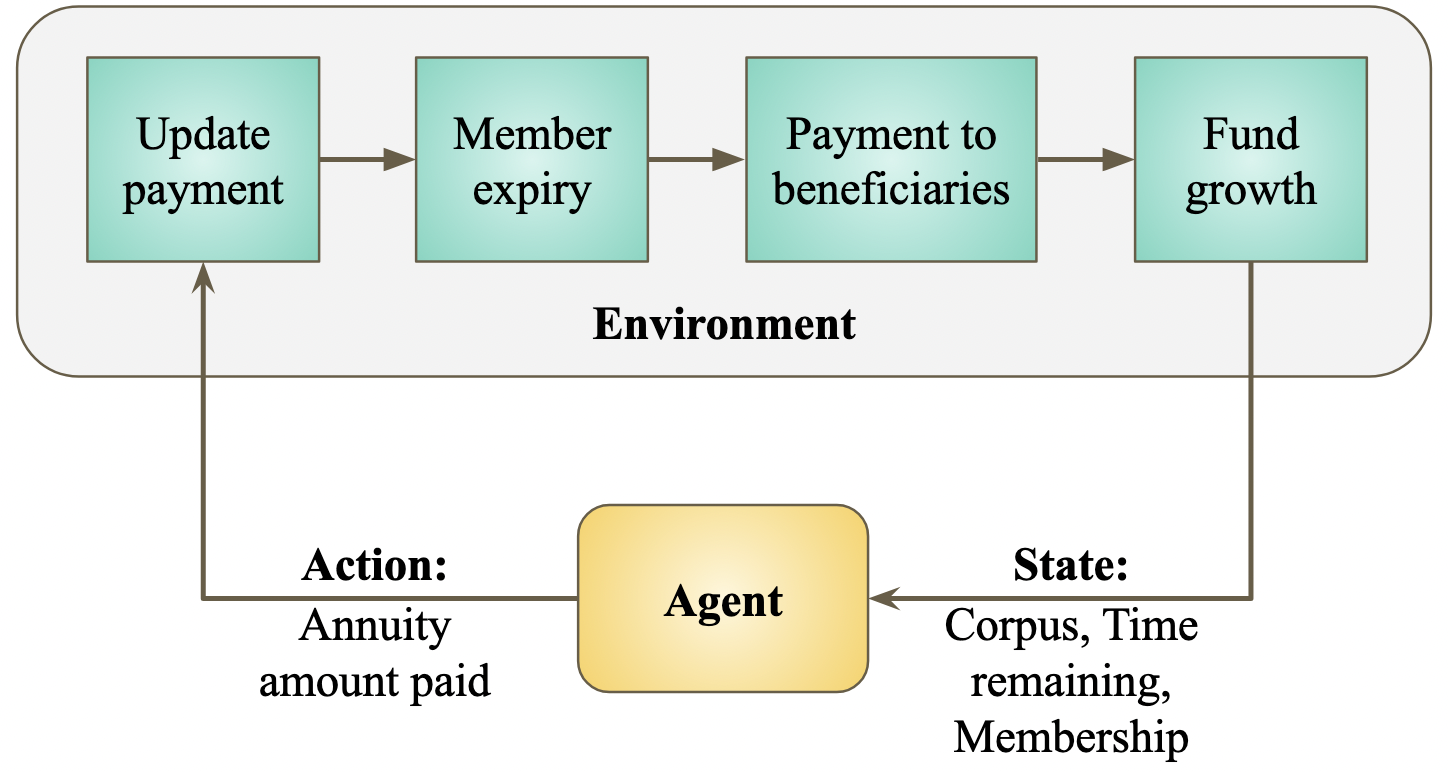}
        \caption{Logical flow of the FTVAF environment.}
        \label{fig:ftvaf_flow}
    \end{subfigure}
    \hfill
    \begin{subfigure}[t]{0.4\textwidth}
        \centering
        \includegraphics[width=\linewidth]{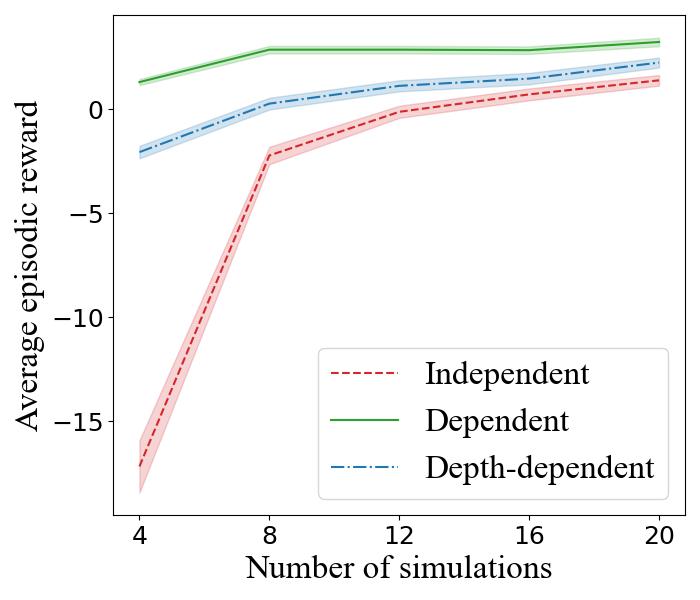}
        \caption{Average episodic rewards.}
        \label{fig:ftvaf_res}
    \end{subfigure}
    
    \caption{Figure (a) explains the sequence of steps in the FTVAF task, while Figure (b) records the performance of different seeding techniques.}
    \label{fig:ftvaf}
\end{figure}

\paragraph{Problem.}
The MDP $(S, A, P, R, H, s_1)$ for the FTVAF problem is expanded as follows. Without loss of generality, we start with an initial corpus of $W_1=1$, since all transitions are proportional.

\begin{itemize}
    \item Each state is specified by $s = (W_h,h,l_h,L_{h,d})$ where $W_h$ is the current corpus amount, $h$ is the elapsed number of years, $l_h$ is the fraction of the original member set who are still alive, and $L_{h,d}=\{L_{h}(x)\}$ is a vector containing the proportion of people of age $x$ in the member set.
    
    \item Action $a_h$ is a scalar in the range $[0,1]$, where $0$ represents ``no payout'' and $1$ represents ``entire remaining corpus paid out''.

    \item Each variable of the state evolves, with an increment in $h$, as follows.
        \begin{itemize}
            \item For the subset of members of age $x$ alive at time $h$, the number remaining in the next year
            $$L_{h+1}(x+1)=L_h(x)\,\left( 1-\frac{\mathrm{Poisson}(N_x\lambda_x)}{N_x} \right),$$
            where $\lambda_x$ is the Poisson rate for the particular age $x$ and $N_x$ is the current number of members in this age group. Simulating mortality using a Poisson process is a standard approach~\citep{aalen1992modelling}. We set the parameter $\lambda_x$ based on the 2003 US vital statistics report~\citep{Arias2006USLifeTables}.
            \item The revised fraction of people alive is given by $l_{h+1}=\sum_x L_{h+1}(x)$.
            \item $W_{h+1}$ is computed from $W_h$ by first deducting the amount paid as annuity in the current year, followed by a proportional payment to beneficiaries of expired members, and then accounting for the growth of the remaining wealth. For the latter, we use the standard model of Geometric Brownian Motion \citep{samuelson1973mathematics}, setting parameters of $\mu=0.15$ and $\sigma=0.2$. 
            Using ``$Z$'' to denote a a standard Gaussian variable, the overall corpus transition is thus $$W_{h+1} = W_h (1 - a_h)\cdot \left(\frac{l_{h+1}}{l_h}\right) \cdot e^{(\mu - \frac{\sigma^2}{2}) + \sigma Z}.$$
        \end{itemize}
    \item The reward $R_h$ at each time step is the total amount paid out as annuity in the current year: that is, $W_h\,a_h$. Additionally, we provide terminal rewards for the two ways in which a simulation path can terminate,
        \begin{itemize}
            \item A large penalty of $l_{h}\,(h-(H+1))\,C_1$ in case the corpus drops below $5\%$ of the original amount, relative to the fraction of live subscribers, before the end of tenure. This penalty encourages the agent to keep the fund solvent throughout the tenure. $C_1$ is set as 5.
            \item A smaller penalty of $W_H\,C_2$ to discourage the agent from keeping too much wealth in the corpus up to the end of the tenure. $C_2$ is set as 0.03.
        \end{itemize}
\end{itemize}

\paragraph{Solution.}
We define a simulation-based planning approach that satisfies the assumptions in Section \ref{sec:depth-baseddependenceforrollouts}, specifically that the policies being considered vary only in the first $d=1$ step. This amounts to single-step lookahead, which is commonly used in applications \citep{bertsekas1999rolloutsch,Doshi+TAKK:2024}. We divide the action space $A$ into 101 steps from $0\%$ of the original amount to $100\%$ of the original amount. Then we evaluate all 101 branches (values of $a_h$) at the current time, followed by a simple rollout policy that annually pays out a constant $11\%$ of the remaining corpus. The action $a_h$ with the highest average reward is then chosen for each decision step. The FTVAF environment is such that the random growth and mortality \textit{rates} do not depend on the state and action taken. We only have to set predetermined sequences of samples from the standard Gaussian variables $Z$ and the mortality outcomes for each year. In the dependent case, the $i^{th}$ simulation of each root action is evaluated on the same sequence, while in the independent case, the $i^{th}$ simulation of each root action is evaluated on a different sequence. Under depth-dependence, only the very first transition of any root action gets its own, independent sample.


Figure~\ref{fig:ftvaf_res} compares the episodic reward achieved under the three sampling regimes, when the horizon is fixed to $H = 20$ years. The simulation also requires an initial population of subscribers; we draw the ages of these members uniformly at random from $[60, 70]$ years.
The plot demonstrates that dependent and depth-dependent schemes are more effective, especially when the number of simulations is small. Given the high stochasticity and the large number of actions, independence even in just the first time step appears to hurt performance.

\section{Ludo}
\label{sec:ludo}

Ludo is a strategy-based board game for 2--4 players. As shown in Figure~\ref{fig:ludo_board}, each player has 4 pieces, distinguished by colour from other players' pieces. All 4 pieces of a player begin in a designated starting area. The players then take turns to advance their pieces through a circuit on the ludo board; the first player to get all 4 of its pieces home is declared the winner. In general, pieces move on the board according to the outcome of a die roll. For example, if a 3 comes up from a player's roll, the player must move one of its eligible pieces by 3 squares. The decision to make is which piece to advance. There are several special cases in the game, such as repeated moves upon a ``6'' outcome, sending an opponent's piece back to the start by landing on the same square, and avoiding this very event by occupying designated ``safe'' squares. We refer the reader to \citet{ludopy} (creator of ``LUDOpy'') for a detailed description of the rules of the game.

We consider a 2-player game in which an agent plays against an opponent that takes random actions.
We use the LUDOpy~\citep{ludopy} simulator and model the opponent as part of the environment. In this formulation, there are two main sources of stochasticity: die rolls and opponent moves.

\paragraph{Problem.}

The MDP $(S, A, P, R, H, s_1)$ for Ludo is formulated as follows.


\begin{itemize}
    \item The state space is given by $S = (\delta, p_1, p_2, p_3, p_4, \hat{p}_1, \hat{p}_2, \hat{p}_3, \hat{p}_4)$, where $1\leq\delta\leq6$ denotes the outcome of die roll, and for $1\leq i \leq 4$, $p_i$ and $\hat{p}_i$ denote the board position of $i^{th}$ piece of agent and opponent, respectively. This leads to $\approx 10^{11}$ states~\citep{alvi2011complexity}.
    
    \item The action space for agent is given by $A = \{a_1,a_2,a_3,a_4\}$ where executing $a_i$ means moving the $i^{th}$ piece according to die outcome $\delta$. Note that in some states, not all four actions are available.

    \item For a given (state, action) pair, state transitions depend on three components: game rules, the outcome of the die roll, and the opponent's move. Given a state, the chosen action of the agent is executed on the basis of the outcome $\delta$. If $\delta \neq 6$, the die is rolled for the opponent, which then makes its move. The LUDOpy simulator automatically implements exceptions such as repeated moves on getting a 6.
Finally, the die is rolled to obtain the agent's next state. 
    
    \item The agent is given a reward of 1 if it wins the game and 0 otherwise.
    
    \item The actual length of each game is variable, but we seldom see games exceeding $300$ moves.
    
    \item $s_1$ is the designated start state of the game when all pieces are in their respective starting area.
\end{itemize}

\begin{figure*}[t]
    \centering
    \begin{subfigure}[t]{0.43\textwidth}
        \centering
        \includegraphics[width=\linewidth]{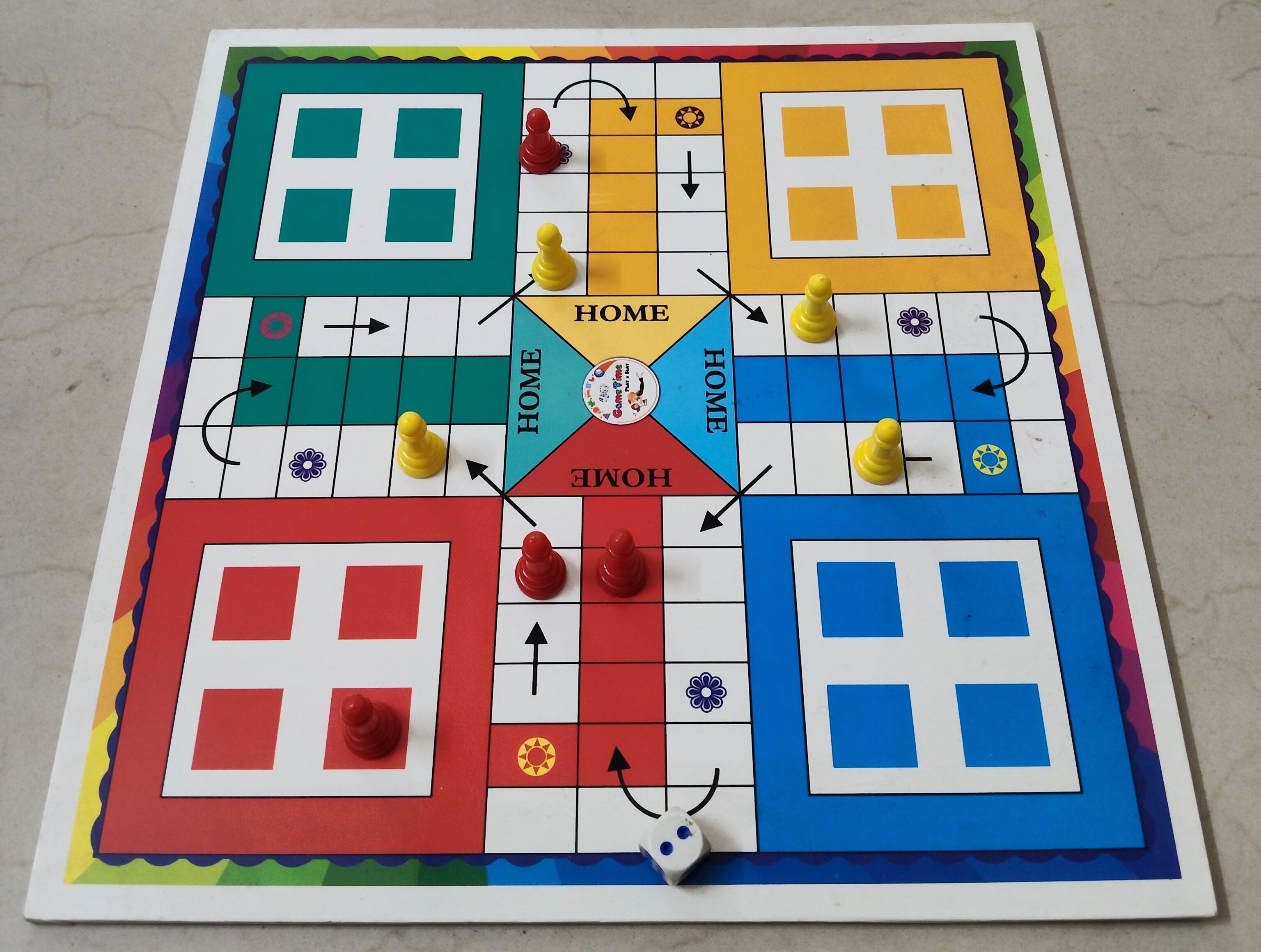}
        \caption{Ludo board}
        \label{fig:ludo_board}
    \end{subfigure}
    \hspace{0.1\columnwidth}
    \begin{subfigure}[t]{0.4\textwidth}
        \centering
        \includegraphics[width=\linewidth]{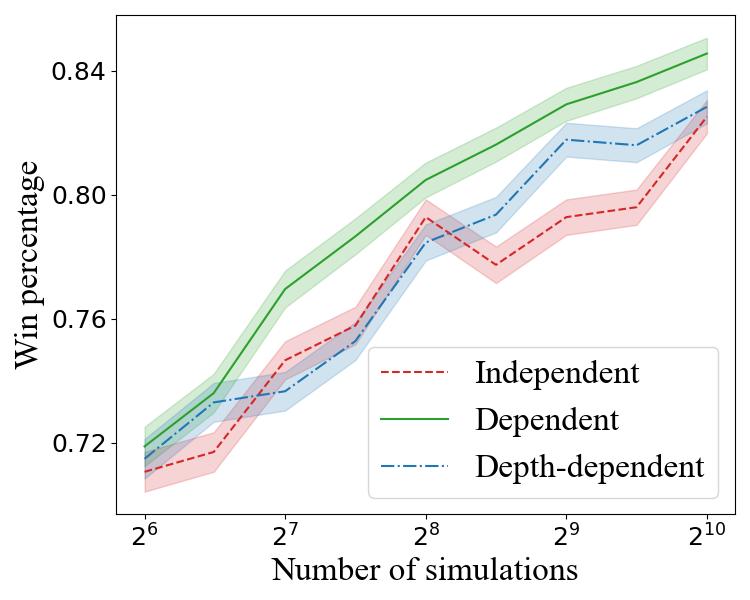}
        \caption{Win percentage (across 5000 games).}
        \label{fig:ludo_res}
    \end{subfigure}
    
    \caption{Ludo: Figure (a) shows the board, and Figure (b) the performance of simulation-based planning against an opponent that selects from available actions uniformly at random.}
    \vspace{-0.5cm}
    \label{fig:ludo}
\end{figure*}

\paragraph{Solution.}
We implement an agent that uses depth-limited UCT for decision time planning against an opponent making moves uniformly at random. At each decision step, where at least 2 valid actions are available, the agent invokes UCT and takes the action with the highest utility, as estimated using simulations. The search tree comprises decision nodes for the agent's actions and chance nodes for die outcomes. In each simulation, actions are selected on the basis of UCB values up to depth $d=2$, and thereafter actions are chosen uniformly at random (this is the rollout phase).
We use predetermined sequences of die outcomes and opponent moves for each round. As before, the $j^{th}$ simulation of each root action is evaluated on the same sequence in the dependent setting (or depth-dependent beyond $d$), otherwise on an independently-sampled sequence.

To blunt the high stochasticity in Ludo games, 
we run 5000 games between the agent and opponent, with the opponent always being the first player.
Figure \ref{fig:ludo_res} reports the win percentage as the number of simulations is varied. The dependent seeding scheme clearly dominates, highlighting that a simple seeding trick can provide a significant performance boost for UCT. The less-convincing performance of the depth-dependent scheme motivates an expansion of our theoretical analysis to UCT.

\section{Related Work}
\label{sec:literature}


Low variance is a desirable property for simulation-based stochastic estimation procedures.
A straightforward approach to reduce variance is to simply use more samples or simulations. However, $k^2$ extra simulations are needed to reduce the standard error by a factor of $k$. Thus, this approach is computationally expensive, sometimes even infeasible when compute time is limited.
Over the years, researchers have come up with variance-reduction techniques for various stochastic estimation~\citep{james1985variance, mcgeoch1992analyzing}. Prominent approaches include control variates~\citep{lavenberg1981perspective, glynn2002some}, antithetic variates~\citep{hammersley1956new, cheng1982use}, common random numbers~\citep{Jucker1975POLICYCOMPARINGSE,Glasserman1992SomeGA}, quasi-Monte Carlo methods~\citep{niederreiter1992random, hung2024review}, stratified sampling and importance sampling~\citep[see Chapter 4]{glasserman2004monte}, conditioning~\citep[see Chapter 8]{brandimarte2014handbook}, and moment matching~\citep[see Chapter 10]{jackel2002monte}.




Our work focuses on variance reduction using common random numbers (CRN). The idea of using CRN was originally proposed from a control theoretic perspective \citep{Jucker1975POLICYCOMPARINGSE, Glasserman1992SomeGA, Spall_2003}. It was used to improve the comparison between stochastic simulation outcomes and is broadly equivalent to the practice of setting the seeds of RNGs in reinforcement learning and planning. The primary reference for the application of CRN in MDPs is the thesis of \cite{Ngthesis}. The main related algorithmic contribution is PEGASUS (Policy Evaluation of Goodness and Search Using Scenarios) ~\citep{Ng2000PEGASUSAP}, designed for efficient policy search in MDPs and Partially Observable MDPs (POMDPs). The authors illustrate the perspective that by using CRN, a stochastic MDP can effectively be treated as a distribution of deterministic MDPs. This same view explains the backward process that we described in sections \ref{sec:estimatingvaluedifference} and \ref{sec:depth-baseddependenceforrollouts}. However, the focus of our theoretical analysis is somewhat different from that of PEGASUS~\citep{Ng2000PEGASUSAP}. The latter argues that the optimiser of an \textit{infinite} set of policies cannot be found by simulation-based evaluation unless the simulations are dependent, in the manner of our $X_{\D}$. Our own focus has been to improve the winner-identification process on a specific family of policies (which agree beyond some depth) by reducing variance in their evaluations. Interestingly, other theoretical studies have used CRN as a means to develop sample-efficient PAC reinforcement learning algorithms \citep{kearns1999approximate, Kalyanakrishnan+SG:2025}. 

The most well-known application of PEGASUS is its use in a model-based reinforcement learning algorithm for autonomous helicopter flight~\citep{A.Ng2003}. The concept of CRN has also been used in robotics~\citep{Wang_2010} and disease simulation modelling \citep{stout2008keeping}. Implementations of many popular optimisation algorithms---such as Trust Region Policy Optimisation~\citep{schulman2015trust} and Evolutionary Strategies \citep{Salimans2017EvolutionSA}---also incorporate CRN into their workflows. As our own demonstrations on FTVAF and Ludo reinforce, the concept of CRN is simple to implement and can offer significant performance gains.



\section{Conclusion and Future Work} \label{sec:conclusion}

Simulation-based planning 
is widely used by agents operating in stochastic environments. Implementations invariably rely on RNGs to generate pseudorandom samples. The question motivating our work is whether planning can be made more efficient by deliberately controlling the seed of the underlying RNG. We provide an affirmative response, along with theoretical and empirical support. All our proposal requires is that the seed for each sampling operation be set as a function of certain variables (such as the state, action, time step, simulation number, and the policy or root action), which are typically quite easy to store and update during simulations.

Our theoretical analysis shows that an ``obvious'' way of seeding, by which the policies being compared are evaluated on the same set of MDPs, can potentially lead to a higher variance in the value difference estimate. However, this anomaly appears to require a certain interaction between the policies and the underlying task, which is quite uncommon in practice, including in our own evaluations on two real-world applications. We also develop a ``depth-dependent'' alternative that is safer in theory, and remains quite effective in practice.

Our current analysis stops at upper-bounding the variance of our proposed estimator, and does not explicitly incorporate this property into sample-complexity bounds. Also, our simplification does not model dynamic algorithms such as UCT, which do not explore policies uniformly. These are avenues for future work. On the practical side, extending evaluations to an even broader range of tasks could provide pointers for refining our framework. We publish code to replicate all our experiments (submitted for review) and to enable  researchers to implement CRN in other applications.

\bibliography{main}
\bibliographystyle{rlj}

\end{document}